\documentclass[10pt,conference]{IEEEtran}
\usepackage{amsmath,amssymb,amsthm,bm,booktabs,microtype,tabularx}
\usepackage[hidelinks]{hyperref}
\newtheorem{theorem}{Theorem}
\newtheorem{corollary}{Corollary}
\newtheorem{proposition}{Proposition}
\newtheorem{lemma}{Lemma}
\newcommand{\A}{\mathcal{A}}
\newcommand{\B}{\mathcal{B}}
\newcommand{\E}{\mathbb{E}}
\newcommand{\cC}{\mathfrak{C}}

\title{Complementary Feature Domains: Information Preservation Does Not Imply Predictive-Contribution Preservation}
\author{\IEEEauthorblockN{Timothy Oladunni and Farouk Ganiyu-Adewumi}
\IEEEauthorblockA{Department of Computer Science, Morgan State University\\
Baltimore, Maryland, USA\\
Email: timothy.oladunni@morgan.edu}}

\begin{document}
\maketitle

\begin{abstract}
Complementary Feature Domains (CFD) theory characterizes predictive value as a context-indexed contribution system induced jointly by representations and their realization family. We show that Shannon-information preservation does not imply preservation of this contribution system: an invertible representation transformation can leave target information unchanged while altering predictive contribution under a restricted decision family. We formalize the resulting transition through a \emph{CFD contribution defect} that measures how contextual contributions change under controlled recoding. For bounded Lipschitz utility, we show that each coalition utility shift is bounded by the behavioral distance between the attainable action sets before and after recoding; consequently, every contextual contribution defect is bounded by the sum of the corresponding coalition incompatibilities. Exact behavioral closure yields invariance, while increasingly accurate compensation yields restoration. A controlled ECG experiment illustrates the mechanism: a nonlinear bijective recoding preserves the information in a frozen time--frequency representation but changes accuracy under a fixed affine learner; applying the exact inverse restores all tested coalition accuracies. The result separates information preservation from realization-dependent contribution and provides a quantitative transition law for multi-representation prediction.
\end{abstract}

\section{Introduction}
Modern prediction systems rarely operate on a single immutable description of an observation. The same underlying source may be represented in time, frequency, time--frequency, learned, compressed, or otherwise transformed coordinates, and these representations are then fused under a particular model class. A natural intuition is that an invertible recoding should be harmless: if $D'=g(D)$ for a bijection $g$, then no Shannon information about the target has been lost, so $I(D';Y)=I(D;Y)$. Yet this conclusion is representation-level, whereas prediction is realized through a restricted family of decision rules. If that family cannot implement the compensating map $g^{-1}$, two information-equivalent encodings can induce different attainable decisions and therefore different predictive utility.

This mismatch matters especially in fusion. The practical question is not only whether a representation contains target information, but what it contributes \emph{after a particular set of other representations is already available}. A domain can be weak in isolation but decisive in one coalition, useful in another, and redundant in a richer coalition. Consequently, changing the realizability of one representation can alter a pattern of marginal contributions across many contexts even though the transformed representation, and indeed every corresponding representation subset, preserves the same Shannon information. A context-free relevance score cannot describe that transition.

Existing frameworks address related but distinct questions. Shannon established the classical information measure underlying statistical dependence \cite{shannon1948}; standard information-theoretic treatment further develops these quantities and their operational roles \cite{cover2006}. The information bottleneck formulates relevance-preserving compression \cite{tishby1999}. Williams and Beer introduced PID to separate redundant, unique, and synergistic information supplied by multiple sources \cite{williams2010pid}. Bertschinger \emph{et al.} formalized unique information through decision-theoretic constraints \cite{bertschinger2014}, while Griffith and Koch developed a measure of synergistic information \cite{griffith2014}. Ince proposed a pointwise construction for multivariate redundancy \cite{ince2017}, and Finn and Lizier developed a pointwise PID based on specificity and ambiguity lattices \cite{finn2018}. Predictive $\mathcal V$-information instead makes usable information predictor-family dependent \cite{xu2020vinfo}. Blackwell comparison orders statistical experiments by their value for decision making \cite{blackwell1953}. These frameworks characterize information content, decomposition, usability, or decision informativeness. To our knowledge, none directly characterizes how a \emph{complete coalition-indexed predictive contribution system} changes under representation recoding with an explicit realization family. CFD studies precisely this representation-to-contribution transition.

Complementary Feature Domains (CFD) theory, as developed here, concerns the stability and transition of contextual predictive contributions under representation--realization interventions. It is motivated by this missing transition problem. Our prior ECG studies show that domain combinations and fusion stage materially alter realized predictive benefit \cite{oladunni2025cfd,oladunni2025fusion}. These observations motivate separating represented content from realization-dependent contribution. Here we turn that observation into a formal question: \emph{when does an information-preserving transformation preserve the contextual predictive contribution of a representation, and when can those contributions change despite preservation of Shannon information?} For a representation system $\Phi$ and realization family $\A$, we therefore define a coalition utility envelope and the marginal contribution of each representation in every admissible context. The resulting object is not a single importance score but a complete contribution system. We then study its transition from $(\Phi,\A)$ to a recoded system $(g\Phi,\B)$.

Our main contribution is a quantitative representation--realization stability result. We define a behavioral compatibility error $\varepsilon_g(S;\A,\B)$ for each coalition and prove
\begin{equation}
\Gamma_g(i\mid S;\A,\B)\leq L\!\left[\varepsilon_g(S\cup\{i\};\A,\B)+\varepsilon_g(S;\A,\B)\right],
\label{eq:mainbound}
\end{equation}
where $\Gamma_g$ is the change in contextual contribution caused by the intervention. Thus information-preserving recoding does not by itself guarantee contribution invariance; behavioral compatibility does. Exact closure is the zero-error case, and vanishing compatibility error restores the original contribution system.

\section{Contribution Systems}
Let $Z\sim P_Z$ be an underlying source, $Y=\tau(Z)$ a target, and
$\Phi=\{\phi_1,\ldots,\phi_m\}$ a representation system with $D_i=\phi_i(Z)$. For $S\subseteq[m]$, write $D_S=(D_i)_{i\in S}$. Let $\A_S$ be a declared family of decision rules available from coalition $S$. For bounded utility $u$, define
\begin{equation}
V^*_{\Phi,\A}(S)=\sup_{a\in\A_S}\E[u(Y,a(D_S))].
\label{eq:value}
\end{equation}
Thus $V^*_{\Phi,\A}(S)$ is not the information contained in coalition $S$; it is the best population-level utility that the declared realization family can attain from that coalition. It therefore depends jointly on the representation and on what the learner is permitted to realize.

For $i\notin S$, the contextual marginal contribution is
\begin{equation}
\Delta^*_{\Phi,\A}(i\mid S)=V^*_{\Phi,\A}(S\cup\{i\})-V^*_{\Phi,\A}(S).
\label{eq:delta}
\end{equation}
A positive $\Delta^*(i\mid S)$ means that adding representation $i$ unlocks predictive utility beyond what was already attainable from $S$; zero means that $i$ adds no realizable utility in that context. Because the baseline coalition $S$ changes, the same representation can have different contributions in different contexts.

The complete contribution system is
\begin{equation}
\cC^*(\Phi,\A)=\left(\Delta^*_{\Phi,\A}(i\mid S)\right)_{S\subseteq[m],\,i\notin S}.
\label{eq:system}
\end{equation}
Equation~\eqref{eq:system} stacks these local increments over the entire coalition lattice. Two systems can therefore have the same best full-coalition accuracy yet differ in how predictive value is distributed across representations and contexts.

If the coalition value is Shannon target information, $V_I(S)=I(D_S;Y)$, then the chain rule gives
\begin{equation}
\Delta_I(i\mid S)=I(D_i;Y\mid D_S).
\label{eq:cmi}
\end{equation}
This identity provides the information-theoretic reference point: if coalition value is defined purely by Shannon information, the incremental value of $i$ is exactly the target information in $D_i$ that is not already supplied by $D_S$.

Consider componentwise bijections $g_i$ and the recoded system $g\Phi=\{g_i\circ\phi_i\}_{i=1}^m$. Under unrestricted decision rules, or whenever the post-recoding family implements the corresponding inverse compositions, the attainable decisions coincide and therefore $\cC^*(g\Phi,\B)=\cC^*(\Phi,\A)$. The question is what can be said when this closure is only approximate.

\section{Separation Under Restricted Realization}
The transition problem is nonvacuous even when the representation change is perfectly information preserving. The following finite witness separates Shannon information from optimized utility under a fixed restricted family.

\begin{lemma}[Information preservation does not imply realized-utility preservation]
There exist finite-valued representations $D$ and $D'=g(D)$ related by a bijection and a fixed realization family $\A$ such that
\begin{equation}
I(D;Y)=I(D';Y),\qquad V^*_{D,\A}\neq V^*_{D',\A}.
\label{eq:separation}
\end{equation}
\end{lemma}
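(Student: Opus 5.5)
The proof is by explicit construction: a finite example in which a bijective recoding is exactly compensable by unrestricted rules but not by a fixed restricted family $\A$. Information equality will follow from invariance of mutual information under bijections, and the utility gap will be checked by direct enumeration.

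\emph{Construction.} Let $Z$ be uniform on $\{0,1,2,3\}$, let $D=\phi(Z)=Z$ take values in $\{0,1,2,3\}\subset\mathbb{R}$, and set $Y=\tau(Z)=\mathbf{1}[Z\geq 2]$. We use 0--1 utility $u(y,\hat y)=\mathbf{1}[y=\hat y]$, which is bounded. Take $\A$ to be the family of monotone threshold rules $a_t(d)=\mathbf{1}[d\geq t]$ for $t\in\mathbb{R}$; this is a one-dimensional affine classifier with fixed orientation. For the recoding, let $g$ be the permutation $g(0)=0$, $g(1)=2$, $g(2)=1$, $g(3)=3$, and put $D'=g(D)$.

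\emph{Information equality.} Since $g$ is a bijection on a finite set, $\sigma(D')=\sigma(D)$. Hence $H(Y\mid D')=H(Y\mid D)$, and so $I(D';Y)=I(D;Y)$. Here both values equal $H(Y)=1$ bit, because $Y$ is a deterministic function of $D$, and therefore also of $D'=g(D)$ through $g^{-1}$.

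\emph{Utility gap.} On the original representation, the threshold $t=2$ classifies every point correctly, so $V^*_{D,\A}=1$. On $D'$, the label sequence ordered by the value of $D'$ is $(Y\mid D'=0,1,2,3)=(0,1,0,1)$. Any threshold rule predicts $0$ on an initial segment and $1$ on the rest. Enumerating the five distinct cut positions gives accuracies $2/4$, $3/4$, $2/4$, $3/4$, and $2/4$, so the maximum is $3/4$. Thus $V^*_{D',\A}=3/4\neq 1=V^*_{D,\A}$, which gives \eqref{eq:separation}. The supremum over $t$ is attained, because only finitely many cut positions are distinct, so the enumeration is exhaustive.

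\emph{Main point to check.} The delicate step is making sure the example is not vacuous by accident. The family $\A$ must be fixed and identical before and after recoding. It also must not contain the compensating composition $a\circ g^{-1}$, since otherwise $V^*$ would coincide, as the closure remark at the end of the previous section shows. Here that composition is $\mathbf{1}[g^{-1}(d')\geq 2]$, which is non-monotone in $d'$ and therefore lies outside $\A$.

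One might also want $\A$ to include orientation-reversing thresholds $\mathbf{1}[d<t]$. In that case the permutation still leaves the labels alternating, $0,1,0,1$, and the best such rule again reaches only $3/4$. So the gap survives, and the lemma holds for either natural choice of affine family.
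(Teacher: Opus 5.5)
Your construction is correct and follows the same route as the paper: an explicit finite witness in which mutual information is preserved because the recoding is a bijection and $Y$ is a deterministic function of $D$, and the utility gap is certified by an exhaustive check over a fixed affine-type family. The only difference is the witness. Your one-dimensional permutation turns a threshold-separable labeling into the alternating pattern $0,1,0,1$, which makes the cut enumeration completely elementary. The paper instead maps the two-dimensional XOR support onto a linearly separable one, and it reuses that XOR witness in Proposition~1 and in the capacity stress test.
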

\begin{proof}
Let four source states be equiprobable and let $D$ take the values $(0,0),(0,1),(1,0),(1,1)$ with labels $0,1,1,0$, respectively. Thus $Y$ is a deterministic function of $D$ and $I(D;Y)=H(Y)=1$ bit. Define the support bijection
\begin{align}
(0,0)&\mapsto(-2,-1), &(0,1)&\mapsto(1,-1),\nonumber\\
(1,0)&\mapsto(2,1), &(1,1)&\mapsto(-1,1).
\label{eq:supportmap}
\end{align}
Because $g$ is bijective, $I(g(D);Y)=1$ bit. Let $\A$ be affine binary classifiers with $0$--$1$ utility. The original XOR support is not linearly separable and its optimal affine accuracy is $3/4$. In the recoded support, the two classes are separated by the sign of the first coordinate, so the optimal affine accuracy is $1$. Hence \eqref{eq:separation} holds.
\end{proof}

This witness is intentionally elementary. Predictive $\mathcal V$-information already explains why a restricted family can access different amounts of predictive information from the two encodings \cite{xu2020vinfo}. The CFD question begins one level higher: when one component of a multi-representation system is recoded, how do those accessibility changes propagate through \emph{all} representation contexts?

\begin{proposition}[Subset-information equivalence does not imply contribution equivalence]
There exist multi-representation systems $\Phi$ and $\Psi$ such that
\begin{equation}
I(D_S;Y)=I(D'_S;Y)\quad\text{for every }S\subseteq[m],
\label{eq:subseteq}
\end{equation}
while, under one fixed restricted realization family $\A$,
\begin{equation}
\cC^*(\Phi,\A)\neq\cC^*(\Psi,\A).
\label{eq:cssep}
\end{equation}
\end{proposition}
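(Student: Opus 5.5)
Take the XOR witness of the preceding Lemma and embed it in a two-representation system whose second component carries no target information. Let $m=2$ and let $Z$ be uniform on four states. Let $\Phi=\{\phi_1,\phi_2\}$, where $D_1=\phi_1(Z)\in\{(0,0),(0,1),(1,0),(1,1)\}$ carries the XOR labels $Y=0,1,1,0$. Let $D_2=\phi_2(Z)$ be a constant (say $D_2\equiv 0\in\mathbb{R}$). Define $\Psi=g\Phi=\{g\circ\phi_1,\phi_2\}$, with $g$ the support bijection \eqref{eq:supportmap}, so $D'_1=g(D_1)$ and $D'_2=D_2$. Take $\A_S$ to be affine binary classifiers on the concatenated coordinates of $D_S$ with $0$--$1$ utility. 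For $S=\emptyset$, $\A_\emptyset$ consists of the constant classifiers, which is the affine family on a zero-dimensional input.

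First verify \eqref{eq:subseteq} subset by subset. For $S=\emptyset$ both sides are $0$. For $S=\{2\}$ both are $0$, because $D_2=D'_2$ is constant. For $S=\{1\}$ and $S=\{1,2\}$ both equal $H(Y)=1$ bit. This follows because $(D_1,D_2)\mapsto(g(D_1),D_2)$ is a bijection on the support, and Shannon information is invariant under bijections, exactly as in the Lemma.

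Next compute the coalition values. $V^*(\emptyset)=V^*(\{2\})=1/2$ in both systems, since classes are balanced and a constant feature adds nothing to an affine rule beyond its bias. For $\Phi$, $V^*(\{1\})=V^*(\{1,2\})=3/4$, since the XOR optimum from the Lemma is unchanged by a constant extra coordinate. For $\Psi$, $V^*(\{1\})=V^*(\{1,2\})=1$, via the sign of the first coordinate. Hence $\Delta^*_{\Phi,\A}(1\mid\emptyset)=1/4$ while $\Delta^*_{\Psi,\A}(1\mid\emptyset)=1/2$, and likewise in context $\{2\}$. This already gives \eqref{eq:cssep}.

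The only step needing care is the claim that the optimal affine accuracy on the XOR support is exactly $3/4$ and not $1$. It can be cited from the Lemma, or rechecked by noting that any affine $w\cdot x+b$ separating the classes would need $b<0$, $w_1+w_2+b<0$, $w_1+b>0$, $w_2+b>0$; summing the last two against the first two gives a contradiction, while $3/4$ is attained by, e.g., $x_1+x_2-1/2$. A further remark would strengthen the example: contexts can differ non-uniformly if $D_2$ is chosen informative. For instance, take $D_2=$ the first bit of $D_1$, so that coalition $\{1,2\}$ is still non-separable for $\Phi$. This extension is optional; the minimal construction suffices for the proposition.
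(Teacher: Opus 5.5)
Your proof is correct and is essentially the paper's construction. Both embed the Lemma's XOR witness as one component beside target-irrelevant companions, get subset-information equality from bijection invariance, and read off a differing marginal from the $3/4$ versus $1$ affine optima. The paper uses two independent fair nuisance bits and needs a slice-by-slice averaging argument to show they leave the affine optimum unchanged. Your constant companion makes that step immediate, since it is absorbed into the bias.

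One small caveat on your optional remark, which does not affect the proof: $D_2=x_1$ is independent of $Y$ under XOR. It produces exactly the same coalition values and defects as the constant companion, so it does not exhibit the non-uniform context dependence you suggest.
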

\begin{proof}
Let $R$ be the XOR representation in the lemma and $R'=g(R)$. Let $U_1,U_2$ be independent fair nuisance bits, independent of $(R,Y)$, and define $\Phi=(U_1,U_2,R)$ and $\Psi=(U_1,U_2,R')$. Every corresponding subset is related by a bijection, so \eqref{eq:subseteq} follows from invariance of mutual information. Let $\A_S$ be affine classifiers on the concatenated coordinates supplied by $S$. For any fixed affine score on $(R,U_1,U_2)$, averaging its $0$--$1$ correctness over the independent nuisance pair $(U_1,U_2)$ cannot exceed the best affine correctness attainable on the replicated finite support; direct enumeration of the resulting affine dichotomies gives the same optimum as on $R$ alone. Thus appending these independent nuisance coordinates leaves the optimum at $3/4$ for $R$ and $1$ for $R'$. Every coalition containing the third representation therefore differs, whereas coalitions excluding it are unchanged. At least one contextual marginal must therefore differ, proving \eqref{eq:cssep}.
\end{proof}

Thus even complete Shannon subset-information equivalence is insufficient for contribution-system equivalence; the missing condition is compatibility between the transformation and the realization family.

\section{Compatibility Defect and Stability}
Let $(\mathcal Q,d)$ be the action space and suppose $u$ is $L$-Lipschitz in its action argument:
\begin{equation}
|u(y,q)-u(y,q')|\leq Ld(q,q').
\end{equation}
For coalition $S$, define the random-action sets
\begin{align}
\mathcal H_S&=\{a(D_S):a\in\A_S\},\\
\mathcal H^g_S&=\{b(g_S(D_S)):b\in\B_S\}.
\end{align}
We measure realization incompatibility by the symmetric $L^1(P)$ Hausdorff distance
\begin{align}
\varepsilon_g(S;\A,\B)=\max\Big\{&\sup_{Q\in\mathcal H_S}\inf_{Q'\in\mathcal H^g_S}\E d(Q,Q'),\nonumber\\
&\sup_{Q'\in\mathcal H^g_S}\inf_{Q\in\mathcal H_S}\E d(Q,Q')\Big\}.
\label{eq:epsilon}
\end{align}
The two directed terms ask whether every action realizable before recoding can be approximated after recoding and vice versa. Hence $\varepsilon_g(S)=0$ means that the two attainable-action sets have the same $L^1(P)$ closure on coalition $S$; if those sets are closed, they coincide. Larger values quantify realization incompatibility in the action metric relevant to utility.

Define the coalition utility response
\begin{equation}
E_g(S)=V^*_{g\Phi,\B}(S)-V^*_{\Phi,\A}(S)
\end{equation}
Thus $E_g(S)$ is the intervention-induced change in the best attainable utility of coalition $S$: its sign indicates whether recoding helps or hurts that coalition under the declared realization families. We then difference these coalition responses to determine whether the role of an individual representation changes. Define the CFD contribution defect
\begin{equation}
\Gamma_g(i\mid S;\A,\B)=\left|\Delta^*_{g\Phi,\B}(i\mid S)-\Delta^*_{\Phi,\A}(i\mid S)\right|.
\label{eq:gamma}
\end{equation}
Accordingly, $\Gamma_g(i\mid S)$ is zero precisely when the intervention leaves the contextual marginal role of $i$ unchanged for that comparison. A nonzero value does not by itself imply information loss; it says that the contribution realized from $i$ after $S$ changed under the representation--realization intervention.
Operationally, $\Gamma_g$ follows directly from pre/post-intervention coalition utilities and does not require estimating $\varepsilon_g$; the latter explains and bounds the transition and can be approximated over finite candidate realization families when needed.

\begin{theorem}[Approximate representation--realization invariance]
For every coalition $S$,
\begin{equation}
|E_g(S)|\leq L\varepsilon_g(S;\A,\B).
\label{eq:coalitionbound}
\end{equation}
The inequality converts behavioral proximity into utility stability: if the two realization families can closely reproduce one another's actions on $S$, then the optimal value assigned to $S$ cannot move far.

Consequently, for $i\notin S$, the contribution defect satisfies \eqref{eq:mainbound}. The two terms in that bound have a direct interpretation: one measures compatibility before $i$ is added (coalition $S$), and the other after it is added (coalition $S\cup\{i\}$). Thus contextual contribution is stable whenever the learner is behaviorally compatible on both sides of the marginal comparison. Equivalently, with $\|\cC^*(g\Phi,\B)-\cC^*(\Phi,\A)\|_\infty:=\max_{S,i\notin S}\Gamma_g(i\mid S;\A,\B)$, the theorem yields the system-level stability law
\begin{equation}
\|\cC^*(g\Phi,\B)-\cC^*(\Phi,\A)\|_\infty\leq2L\max_S\varepsilon_g(S;\A,\B).
\label{eq:systembound}
\end{equation}
\end{theorem}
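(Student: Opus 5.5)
The plan is to prove the coalition bound \eqref{eq:coalitionbound} first, by a one-sided approximation argument applied in each direction. The contribution bound \eqref{eq:mainbound} and the system law \eqref{eq:systembound} then follow by the triangle inequality.

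Fix $S$ and write $V_A=V^*_{\Phi,\A}(S)=\sup_{Q\in\mathcal H_S}\E[u(Y,Q)]$ and $V_B=V^*_{g\Phi,\B}(S)=\sup_{Q'\in\mathcal H^g_S}\E[u(Y,Q')]$. This rewriting is immediate from \eqref{eq:value} and the definitions of the random-action sets.

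\textbf{Step 1: $V_A\le V_B+L\varepsilon_g(S)$.} Take any $Q\in\mathcal H_S$ and any $\eta>0$. By the first directed term of \eqref{eq:epsilon}, there is $Q'\in\mathcal H^g_S$ with $\E d(Q,Q')\le\varepsilon_g(S)+\eta$. The Lipschitz hypothesis, applied pointwise and then integrated, gives
\begin{equation*}
\E[u(Y,Q)]\le\E[u(Y,Q')]+L\,\E d(Q,Q')\le V_B+L(\varepsilon_g(S)+\eta).
\end{equation*}
Taking the supremum over $Q$ and letting $\eta\downarrow0$ proves Step 1.

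\textbf{Step 2: $V_B\le V_A+L\varepsilon_g(S)$.} This is the same argument using the second directed term of \eqref{eq:epsilon}.

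Together the two steps give $|E_g(S)|\le L\varepsilon_g(S;\A,\B)$. Boundedness of $u$ ensures all suprema are finite, so the subtraction is legitimate. If $\varepsilon_g(S)=\infty$ the bound is vacuous.

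\textbf{Step 3: the contribution bound.} From \eqref{eq:delta}, the difference of the two contextual marginals is
\begin{equation*}
\Delta^*_{g\Phi,\B}(i\mid S)-\Delta^*_{\Phi,\A}(i\mid S)=E_g(S\cup\{i\})-E_g(S).
\end{equation*}
Applying the triangle inequality and then \eqref{eq:coalitionbound} to both coalitions gives \eqref{eq:mainbound}. Bounding each $\varepsilon_g$ term by $\max_S\varepsilon_g(S)$ and maximizing over pairs $(S,i)$ gives \eqref{eq:systembound}.

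\textbf{Main obstacle.} The only delicate point is measure-theoretic. We need $\E[u(Y,Q)]$ and $\E d(Q,Q')$ to be well defined, so that $d(Q,Q')$ is measurable for actions in the two sets. We also need the expectations to be taken under the same source law $P$. Both hold because $Q$ and $Q'$ are functions of the common source $Z$, through $D_S=\phi_S(Z)$ and $g_S(D_S)$. I would state measurability of $a\circ\phi_S$ and $b\circ g_S\circ\phi_S$ as a standing assumption. The pointwise Lipschitz inequality then integrates directly, and the $\eta$-approximation handles the case where infima in \eqref{eq:epsilon} are not attained.
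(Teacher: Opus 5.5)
Your proof is correct and follows the paper's own argument: the integrated Lipschitz comparison $\E u(Y,Q)\le \E u(Y,Q')+L\,\E d(Q,Q')$, with a near-optimal $Q'$ chosen for each directed Hausdorff term, bounds $|E_g(S)|$, and the triangle inequality applied to $E_g(S\cup\{i\})-E_g(S)$ then yields \eqref{eq:mainbound} and \eqref{eq:systembound}. Your explicit $\eta$-approximation and your remarks on measurability and the common source law make precise what the paper's proof leaves implicit.
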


\begin{proof}
Fix $S$. For $Q\in\mathcal H_S$ and $Q'\in\mathcal H^g_S$, Lipschitz continuity implies
\begin{equation}
|\E u(Y,Q)-\E u(Y,Q')|\leq L\E d(Q,Q').
\end{equation}
For each $Q$, approximate the relevant infimum over $\mathcal H^g_S$, then take the supremum over $Q$. This bounds $V^*_{\Phi,\A}(S)-V^*_{g\Phi,\B}(S)$ by $L$ times the first directed term of \eqref{eq:epsilon}. Reversing the two behavioral sets bounds the opposite difference by the second directed term, proving \eqref{eq:coalitionbound}. Finally,
\begin{align}
\Gamma_g(i\mid S)&=|E_g(S\cup\{i\})-E_g(S)|\\
&\leq |E_g(S\cup\{i\})|+|E_g(S)|,
\end{align}
and substitution gives \eqref{eq:mainbound}.
\end{proof}

\begin{corollary}[Exact closure and restoration]
If $\varepsilon_g(S;\A,\B)=0$ for every coalition, then $\cC^*(g\Phi,\B)=\cC^*(\Phi,\A)$. More generally, if a sequence $(\A_k,\B_k)$ satisfies $\max_S\varepsilon_g(S;\A_k,\B_k)\to0$, then the maximum contextual contribution defect converges to zero.
\end{corollary}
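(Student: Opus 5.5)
The plan is to derive the corollary directly from the theorem, in particular the system-level stability law \eqref{eq:systembound}. Everything reduces to the fact that each contextual defect $\Gamma_g(i\mid S;\A,\B)$ is controlled by the compatibility errors of the two coalitions $S$ and $S\cup\{i\}$.

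\emph{Exact closure.} Assume $\varepsilon_g(S;\A,\B)=0$ for every $S\subseteq[m]$. Applying \eqref{eq:mainbound} to any $S$ and $i\notin S$ gives $\Gamma_g(i\mid S)\le L(0+0)=0$. Hence $\Delta^*_{g\Phi,\B}(i\mid S)=\Delta^*_{\Phi,\A}(i\mid S)$ for every index pair in \eqref{eq:system}, and so $\cC^*(g\Phi,\B)=\cC^*(\Phi,\A)$.

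We do not need the sets $\mathcal H_S$ and $\mathcal H^g_S$ to be closed. The argument behind \eqref{eq:coalitionbound} only uses infima and suprema of $\E d(Q,Q')$, so vanishing Hausdorff distance already forces equal suprema of expected utility. One technical point must hold: every $V^*$ has to be finite so that the differences in \eqref{eq:delta} are well defined. This follows because $u$ is bounded.

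\emph{Restoration.} Let $(\A_k,\B_k)$ be a sequence of realization-family pairs with $\delta_k:=\max_S\varepsilon_g(S;\A_k,\B_k)\to0$. For each $k$, \eqref{eq:systembound} gives
\[
\max_{S,\,i\notin S}\Gamma_g(i\mid S;\A_k,\B_k)\le 2L\delta_k.
\]
The right-hand side tends to zero, so the maximum contextual defect does as well.

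The only subtlety is how ``the original contribution system'' should be read when $\A_k$ varies with $k$. If $\A_k\equiv\A$ is fixed, the conclusion says that $\cC^*(g\Phi,\B_k)\to\cC^*(\Phi,\A)$ in $\|\cdot\|_\infty$. If $\A_k$ also varies, the statement as written only claims that the defect vanishes, and this is exactly what the bound delivers.

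The maximum over $S$ runs over the finite lattice of $2^m$ coalitions, so no uniformity issue arises in passing from coalitionwise to system-level convergence. I expect no real obstacle here. The step needing most care is checking that \eqref{eq:systembound} is valid when the action sets are not closed or the suprema in \eqref{eq:value} are not attained. That can be handled with an $\eta$-approximation argument: pick near-optimal actions and near-minimizing partners, then let $\eta\to0$.
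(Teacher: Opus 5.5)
Your proposal is correct and follows the paper's route: the paper gives no separate proof of the corollary and treats it as immediate from the theorem, with zero compatibility error in \eqref{eq:mainbound} forcing every $\Gamma_g(i\mid S)=0$, and \eqref{eq:systembound} supplying the $2L\max_S\varepsilon_g(S;\A_k,\B_k)$ rate along the sequence. One small addition to your finiteness remark: boundedness of $u$ makes each $V^*$ finite only if every $\A_S$ and $\B_S$ is nonempty, which both you and the paper implicitly assume.
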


The metric comparison underlying \eqref{eq:coalitionbound} is standard; the CFD contribution is the coalition-indexed transition object and its induced bound on changes in contextual predictive role. In \eqref{eq:gamma}, differencing coalition responses across contexts converts behavioral incompatibility into a uniform control on changes in contextual predictive role. The bound is context local: the defect for adding $i$ after $S$ depends only on $S$ and $S\cup\{i\}$. It is also falsifiable: if compensating families drive both behavioral errors to zero while the population contribution defect stays bounded away from zero, the stated mechanism cannot explain that defect. Exact inverse closure instead forces restoration.

\section{Relation to Existing Information-Theoretic Objects}
Table~\ref{tab:position} states the novelty boundary. CFD does not redefine mutual information, PID atoms, usable information, or Blackwell ordering. Its primary object is the \emph{transition of a coalition-indexed predictive contribution system} under a declared representation--realization intervention.

\begin{table}[t]
\caption{Analytical objects and the distinct CFD transition question.}
\label{tab:position}
\centering
\footnotesize
\begin{tabularx}{\columnwidth}{@{}p{0.17\columnwidth}p{0.25\columnwidth}X@{}}
\toprule
Framework & Primary object & Question answered \\
\midrule
MI/CMI \cite{cover2006} & Target dependence & How much target information is present? \\
PID \cite{williams2010pid} & Information decomposition & How is target information distributed among sources? \\
$\mathcal V$-information \cite{xu2020vinfo} & Family-relative usable information & How much information can a predictor family exploit? \\
Blackwell \cite{blackwell1953} & Statistical experiments & Which information structure is at least as useful for decisions? \\
\textbf{CFD} & Contribution transition & Change of contextual roles under a representation--realization intervention \\
\bottomrule
\end{tabularx}
\end{table}

At the unrestricted information level, let actions be all conditional predictive distributions and let utility be log score. Then the optimal coalition value is
\begin{equation}
V^*(S)=-H(Y\mid D_S),
\end{equation}
so the contextual marginal reduces to conditional mutual information,
\begin{equation}
\Delta^*(i\mid S)=I(Y;D_i\mid D_S).
\label{eq:cmi2}
\end{equation}
Under restricted realization, however, Lemma~1 and Proposition~1 show that even equality of the entire Shannon subset-information profile need not preserve the optimized coalition utility function or its contextual differences. PID can therefore remain unchanged under an invertible source recoding while the decision-level contribution system changes. Predictive $\mathcal V$-information captures the accessibility mechanism, whereas $\Gamma_g$ measures the induced change in a representation's \emph{marginal role across contexts}. Blackwell comparison concerns ordering of experiments over decision problems; here the statistical information is held fixed by a bijection and the intervention instead probes compatibility with a declared restricted realization mechanism. Cooperative-game attribution is also adjacent but distinct. A Shapley value averages a representation's marginal contribution over contexts \cite{shapley1953}, whereas CFD retains the unaveraged contextual marginal vector and studies how that vector changes under intervention. Averaging can therefore hide the location of a representation--realization mismatch that is visible in $\Gamma_g(i\mid S)$.

These distinctions give the logical chain
\begin{equation}
\begin{aligned}
\text{information equivalence}&\not\Rightarrow\text{ realization equivalence},\\
\text{information equivalence}&\not\Rightarrow\text{ contribution equivalence},\\
\text{behavioral realization equivalence}&\Rightarrow\text{ contribution equivalence}.
\end{aligned}
\label{eq:chain}
\end{equation}
The stability theorem identifies a sufficient quantitative bridge for the second implication: contribution differences vanish as the attainable behavioral sets before and after recoding approach one another.

\section{Realization-Capacity Stress Test}
The finite witness in Lemma~1 also gives a direct capacity test of the proposed mechanism. We evaluated the same one-bit XOR and bijectively recoded supports under increasingly expressive standard realization families. Table~\ref{tab:synthetic} reports the population optimum for affine rules and full-support fitted accuracy for the nonlinear families. The affine family exhibits the predicted $.25$ utility separation. Once the realization family can express both decision boundaries, the gap disappears.

\begin{table}[t]
\caption{Capacity stress test. Both encodings contain exactly one bit of target information.}
\label{tab:synthetic}
\centering
\small
\begin{tabular}{lccc}
\toprule
Realization family & XOR & Recoded & Gap \\
\midrule
Affine (exact optimum) & .75 & 1.00 & .25 \\
Quadratic logistic & 1.00 & 1.00 & 0 \\
RBF-kernel SVM & 1.00 & 1.00 & 0 \\
Decision tree, depth 2 & 1.00 & 1.00 & 0 \\
\bottomrule
\end{tabular}
\end{table}

The information content is fixed while realization capacity changes. For classification accuracy, Theorem~1 applies to discrete prediction actions equipped with the discrete metric; the experiments evaluate the predicted invariance--failure mechanism rather than numerically estimating the theorem's bound. The gap vanishes once the family realizes equivalent decision behavior on both supports. Predictive $\mathcal V$-information diagnoses family-relative accessibility; CFD tracks how the resulting accessibility changes propagate into contextual coalition contributions and restoration.

\section{Context and Capacity on a Coalition Lattice}
A second exact synthetic calculation isolates context dependence itself. Let $D_1,D_2,D_3$ be independent fair bits and consider four deterministic targets under the uniform distribution on the eight states. For every coalition we computed Bayes-optimal accuracy and the exact best affine-threshold accuracy by enumerating projected dichotomies and testing separability. Table~\ref{tab:battery} reports the full-context marginal of $D_3$ after $D_1,D_2$ are already available.

\begin{table}[t]
\caption{Exact full-context contribution under Bayes (B) and affine (A) realization.}
\label{tab:battery}
\centering
\small
\begin{tabular}{lcccc}
\toprule
Target & $V_B(12)$ & $V_B(123)$ & $\Delta_{3,B}(12)$ & $\Delta_{3,A}(12)$ \\
\midrule
Parity & .500 & 1.000 & .500 & .250 \\
Majority & .750 & 1.000 & .250 & .250 \\
OR & .875 & 1.000 & .125 & .125 \\
AND & .875 & 1.000 & .125 & .125 \\
\bottomrule
\end{tabular}
\end{table}

For parity, $D_3$ contributes nothing in the empty or one-companion contexts but contributes $.5$ after both other bits under Bayes realization; the affine family realizes only $.25$ of that full-context opportunity. In contrast, majority, OR, and AND show no Bayes--affine discrepancy in the reported full-context marginal. Thus context can create an informational opportunity while realization capacity determines how much of that opportunity becomes predictive utility. The parity contribution signatures are
\begin{equation}
\mathbf c_{3,B}=(0,0,0,.5),\qquad \mathbf c_{3,A}=(0,0,0,.25),
\end{equation}
with entries indexed by $\varnothing,\{1\},\{2\},\{1,2\}$. A single context-free ``complementarity'' score cannot represent this structure. Together with the recoding stress test, the battery separates two axes of CFD: \emph{where} a representation contributes across the coalition lattice and \emph{how much} of that contribution a realization family can attain.

\section{Controlled Recoding Experiment}
We tested the mechanism using frozen time ($T$), frequency ($F$), and time--frequency ($TF$) ECG embeddings from 928 observations used in prior CFD work \cite{oladunni2025cfd}. Duplicate-linked observations were grouped before splitting into 491 integrity groups; the fixed split contained 648 training, 143 validation, and 137 sealed test observations. No integrity group crossed splits, and the post-split audit found no exact test-to-training duplicates in the constructed or frozen-embedding representations.

After train-fitted standardization, we applied the coordinatewise bijection
\begin{equation}
g(x)=\operatorname{sgn}(x)|x|^3,\qquad
 g^{-1}(z)=\operatorname{sgn}(z)|z|^{1/3}
\end{equation}
to the $TF$ embedding. This transformation preserves all information in $TF$ but changes its geometry for an affine learner. We compared: (i) raw standardized embeddings, (ii) bijectively recoded embeddings under the same one-versus-rest logistic learner, and (iii) recoded embeddings passed through the exact inverse before the same learner. All conditions used identical data, frozen embeddings, standardization, and learner settings; only the $TF$ intervention differed. The inverse condition applied analytical $g^{-1}$ before classification, not a learned decoder. Accuracy-difference uncertainty used 5,000 paired resamples of test integrity groups.

\begin{table}[t]
\caption{Information-preserving recoding and inverse restoration. Intervals are paired 95\% integrity-group bootstrap intervals for recoded minus raw accuracy.}
\label{tab:results}
\centering
\small
\begin{tabular}{lrrrr}
\toprule
Coalition & Raw & Recoded & Restored & Recoding gap \\
\midrule
$TF$       & .7372 & .6277 & .7372 & $-.1095$ \\
$T+TF$     & .6569 & .5839 & .6569 & $-.0730$ \\
$F+TF$     & .7883 & .5912 & .7883 & $-.1971$ \\
$T+F+TF$   & .7372 & .6058 & .7372 & $-.1314$ \\
\bottomrule
\end{tabular}
\vspace{1mm}
\parbox{0.95\columnwidth}{\footnotesize 95\% intervals for the four gaps, respectively: $[-.2391,-.0095]$, $[-.2073,.0357]$, $[-.3631,-.0177]$, and $[-.2837,.0082]$.}
\end{table}

Table~\ref{tab:results} shows the predicted separation. The intervals exclude zero for $TF$ and $F+TF$, despite the recoding being bijective. Applying the declared inverse restores every reported coalition accuracy exactly; the restoration error and its bootstrap interval are $0$ and $[0,0]$ in every context. The cubic map is not proposed as an optimal ECG representation; it is a controlled, exactly invertible intervention that alters linear decision geometry. It therefore holds represented information fixed while perturbing compatibility with the declared realization family, directly testing the mechanism in Theorem~1.

Because the intervention acts only on $TF$, every coalition not containing $TF$ is identical in the raw and recoded systems. Therefore, for each context $S\subseteq\{T,F\}$, the finite-sample analogue of the CFD contribution defect for adding $TF$ simplifies to the absolute recoding response of $S\cup\{TF\}$:
\begin{align}
\widehat\Gamma_g(TF\mid S)
&=\left|\widehat E_g(S\cup\{TF\})-\widehat E_g(S)\right|\nonumber\\
&=\left|\widehat E_g(S\cup\{TF\})\right|.
\label{eq:empGamma}
\end{align}
The second equality is specific to this intervention: because $g$ touches only $TF$, the response of every baseline coalition $S\subseteq\{T,F\}$ is exactly zero. The empirical defect therefore becomes the absolute change in the corresponding coalition that contains $TF$, making Table~\ref{tab:gamma} a derived summary of Table~\ref{tab:results}, not a separate experiment.

Hence the observed finite-sample defects are $.1095$, $.0730$, $.1971$, and $.1314$ for contexts $\varnothing$, $\{T\}$, $\{F\}$, and $\{T,F\}$, respectively. The same information-preserving intervention therefore changes the realized role of $TF$ by different amounts depending on what other representations are available. This is the context-indexed transition that is invisible to a single global accuracy difference.

After applying $g^{-1}$, the learner again receives the original standardized coordinates; the zero restored defect in all four contexts is therefore the finite-sample analogue of exact behavioral closure in Corollary~1.

\section{Discussion and Conclusion}
Information preservation and predictive-contribution preservation are distinct invariance requirements: a bijection can preserve Shannon information for every coalition while changing the best utility attainable by a restricted realization family. CFD makes this distinction explicit by indexing contribution by coalition and realization family. Its compatibility bound shows that $\Gamma_g(i\mid S)$ is controlled by realization incompatibility before and after adding $i$; exact closure forces invariance, whereas approximate closure bounds the change. A nonzero defect therefore need not indicate information loss, but may reveal representation--learner mismatch.

The experiments test the mechanism, not universal complementarity: the formulation is distribution independent, while ECG tests its predicted failure--restoration pattern in a non-synthetic multi-representation system. The synthetic affine gap disappears for richer families; in ECG, recoding changes the time--frequency contribution across all four contexts, while exact inverse restoration returns every defect to zero. This closure--failure--restoration pattern is consistent with the predicted dependence on context and realization family. CFD complements PID, predictive $\mathcal V$-information, Blackwell comparison, and Shapley attribution by retaining coalition-indexed contribution changes under representation intervention. The bound is sufficient; finite-sample error is outside the population theorem, and broader tasks, realization families, and uncertainty guarantees remain future work.

\clearpage

\begingroup
\footnotesize
\setlength{\itemsep}{0pt}

\endgroup
\end{document}